\newif\ifarXiv
\newif\ifWP
\newif\ifLATIN
\newif\ifnotLATIN	
  \newcommand{\DFI}{GTP7full}
  \newcommand{\DFII}{vovk/etal:arXiv0505083}
  \newcommand{\DFVIII}{vovk:arXiv0606093}
  \newcommand{\AAVI}{vovk:arXiv0607067}
  \newcommand{\DFI}{GTP7full}
  \newcommand{\DFII}{GTP8}
  \newcommand{\DFVIII}{GTP17}
  \newcommand{\AAVI}{vovk:arXiv0607067}
  \newcommand{\Levin}{levin:1976uniform}
  \newcommand{\Levin}{levin:1976uniform-latin}
\newcommand{\Vladimir}{Vladimir }
\newcommand{\DOT}{.}
\DeclareMathOperator{\diam}{diam}
\DeclareMathOperator{\dist}{dist}
\DeclareMathOperator{\co}{co}
\DeclareMathOperator{\D}{D}
\newcommand{\dd}{\textrm{d}}
\newcommand{\st}{\mathop{|}}
\newcommand{\K}{\mathcal{K}}		
\newcommand{\F}{\mathcal{F}}		
\newcommand{\AAA}{\mathcal{A}}		
\newcommand{\BBB}{\mathcal{B}}		
\newcommand{\CCC}{\mathcal{C}}		
\newcommand{\PPP}{\mathcal{P}}		
\newcommand{\SSS}{\mathcal{S}}		
\newcommand{\PPPfin}{\mathcal{P}^{\text{fin}}}		
\newcommand{\bbbr}{\mathbb{R}}
\newtheorem{corollary}{Corollary}
\newtheorem{theorem}{Theorem}
\newenvironment{Theorem}[1]
  {\trivlist\item[\hskip\labelsep\textbf{#1}]\it}
  {\endtrivlist}
\newenvironment{proof}
  {\trivlist\item[\hskip\labelsep\textbf{Proof}]}
  {\endtrivlist}
\newenvironment{Proof}[1]
  {\trivlist\item[\hskip\labelsep\textbf{Proof #1:}]}
  {\endtrivlist}
\newcommand{\boxforqed}{\rule{.3em}{1.5ex}}
\newcommand{\qedtext}{\unskip\nobreak\hfil
  \penalty50\hskip1em\null\nobreak\hfil\boxforqed
  \parfillskip=0pt\finalhyphendemerits=0\endgraf}
\newenvironment{remark*}
  {\trivlist\item[\hskip\labelsep{\bfseries Remark}]\relax}
  {\endtrivlist}
\newlength{\IndentI}
\newlength{\IndentII}
\newlength{\IndentIII}
\newlength{\IndentIV}
\newlength{\WidthI}
\newlength{\WidthII}
\newlength{\WidthIII}
\newlength{\WidthIV}
\title{Continuous and randomized defensive forecasting:\\unified view}
\author{Vladimir Vovk\\
\texttt{vovk{\rm@}cs.rhul.ac.uk}\\
\texttt{http://vovk.net}}
\title{Continuous and randomized defensive forecasting: unified view}
\author{Vladimir Vovk}
\begin{document}
\maketitle
\begin{abstract}
  Defensive forecasting is a method of transforming laws of probability
  (stated in game-theoretic terms as strategies for Sceptic)
  into forecasting algorithms.
  There are two known varieties of defensive forecasting:
  ``continuous'', in which Sceptic's moves are assumed
  to depend on the forecasts in a (semi)continuous manner
  and which produces deterministic forecasts,
  and ``randomized'', in which the dependence of Sceptic's moves on the forecasts is arbitrary
  and Forecaster's moves are allowed to be randomized.
  This note shows that the randomized variety can be obtained from the continuous variety
  by smearing Sceptic's moves to make them continuous.
  \ifarXiv

  \smallskip

  \textbf{New as compared to version 1 (17 August 2007) of this report:}
    The assumption of version 1 that the outcome space $\Omega$ is finite is relaxed,
    and now it is only assumed to be compact.
    In the case where $\Omega$ is finite,
    it is shown that Forecaster can choose his randomized forecasts
    concentrated on a finite set of cardinality at most $\lvert\Omega\rvert$.
  \fi
\end{abstract}

\section{Introduction}

The continuous variety of defensive forecasting
was essentially introduced by Levin
\cite{\Levin},
but was later rediscovered by Kakade and Foster \cite{kakade/foster:2004}
and Takemura \emph{et al.}\ \cite{\DFII}.

The randomized variety was introduced
(in the case of von Mises's version of the game-theoretic approach to probability)
by Foster and Vohra \cite{foster/vohra:1998}
and further developed by, among others, Sandroni \emph{et al.}\ \cite{sandroni/etal:2003};
these papers, however, were only concerned with asymptotic calibration.
Non-asymptotic versions of the randomized variety
were proposed by Sandroni \cite{sandroni:2003}
(based on standard measure-theoretic probability)
and Vovk and Shafer \cite{\DFI}
(based on game-theoretic probability).
Kakade and Foster \cite{kakade/foster:2004} noticed
that some calibration results require very little randomization
(this will be an important aspect of our Theorem \ref{thm:randomized}).

This note states two simple results about defensive forecasting,
Theorem \ref{thm:continuous} about the continuous variety
and Theorem \ref{thm:randomized} about the randomized variety.
The proof of Theorem \ref{thm:randomized} is obtained
from the proof of Theorem \ref{thm:continuous}
by blurring Sceptic's moves.

In our informal discussions
we will be assuming
that the set $\Omega$ of all possible outcomes is finite,
although we will try to make mathematical statements
as general as possible.
The reader who is only interested in the main ideas
might choose to specialize
Theorems \ref{thm:continuous} and \ref{thm:randomized} and their proofs
to the case of finite $\Omega$.

\section{Continuous defensive forecasting}\label{sec:continuous}

Let $\Omega$ (the \emph{outcome space}) be a compact
(i.e., a compact Hausdorff topological space)
equipped with the Baire $\sigma$-algebra
and $\PPP(\Omega)$ be the set of all probability measures on $\Omega$
equipped with the standard topology
(the weak$^*$ topology on $\PPP(\Omega)$
identified with a subset of $C(\Omega)'$
by a Riesz representation theorem,
Theorem 7.4.1 in \cite{dudley:2002};
this is also known as the topology of weak convergence
in the case of metrizable $\Omega$).
The subset $\PPPfin(\Omega)$ of $\PPP(\Omega)$
consists of all probability measures in $\PPP(\Omega)$
concentrated on a finite subset of $\Omega$.
If $\Omega$ is finite,
$\PPP(\Omega)=\PPPfin(\Omega)$ can be identified
with an $(\lvert\Omega\rvert-1)$-dimensional simplex
(see below)
in Euclidean space
equipped with the standard Euclidean distance and topology.

Theorem \ref{thm:continuous} will be a statement
about the following perfect-information game
involving three players:

\bigskip

\noindent
\textsc{Continuous game}

\smallskip

\noindent
\textbf{Players:} Sceptic, Forecaster, Reality

\smallskip

\noindent
\textbf{Protocol:}

\parshape=8
\IndentI   \WidthI
\IndentI   \WidthI
\IndentII  \WidthII
\IndentIII \WidthIII
\IndentIII \WidthIII
\IndentII  \WidthII
\IndentII  \WidthII
\IndentII  \WidthII
\noindent
$\K_0 := 1$.\\
FOR $n=1,2,\dots$:\\
  Sceptic announces a function $S_n:\Omega\times\PPP(\Omega)\to\bbbr$\\
    which is lower semicontinuous in the second argument\\
    and satisfies $\int_{\Omega}S_n(\omega,p)p(\dd\omega)\le0$ for all $p\in\PPPfin(\Omega)$.\\
  Forecaster announces $p_n\in\PPP(\Omega)$.\\
  Reality announces $\omega_n\in\Omega$.\\
  $\K_n := \K_{n-1} + S_n(\omega_n,p_n)$.

\smallskip

\noindent
\hangindent=7mm

\smallskip

\noindent
\hangindent=7mm
\textbf{Winner:}
Forecaster wins if Sceptic's capital $\K_n$ stays bounded.

\bigskip

\noindent
(For $p\in\PPPfin(\Omega)$,
the integral $\int_{\Omega}S_n(\omega,p)p(\dd\omega)$
is interpreted as a sum,
and so $S_n(\omega,p)$ is not required to be measurable in $\omega$.)

Intuitively,
on each round of the game
Forecaster is asked to give a probability forecast $p_n$
for the outcome $\omega_n$ to be chosen by Reality.
Sceptic is testing the forecasts $p_n$ by gambling against them.
Forecaster wins the game if Sceptic does not detect serious disagreement
between Forecaster and Reality.

The continuous game is stated here
in the form that makes Theorem \ref{thm:continuous} as strong as possible.
In typical applications in prediction with expert advice and algorithmic information theory,
Sceptic's move $S_n(\omega,p)$
is lower semicontinuous jointly in $(\omega,p)\in\Omega\times\PPP(\Omega)$
and measurable in $\omega$;
the condition $\int_{\Omega}S_n(\omega,p)p(\dd\omega)\le0$
is required to hold for all $p\in\PPP(\Omega)$.
Furthermore,
there is an important restriction imposed on Sceptic:
he must choose $S_n$ so that his capital remains nonnegative
($\K_n \ge 0$) no matter how the other players move
(in particular, the function $S_n$ must be bounded below).
Theorem \ref{thm:continuous}, however,
does not depend on these further assumptions.


The following result was stated (in different terms)
by Levin \cite{\Levin}.
\begin{theorem}\label{thm:continuous}
  Forecaster has a strategy in the continuous game
  that guarantees $\K_0\ge\K_1\ge\K_2\ge\cdots$.
\end{theorem}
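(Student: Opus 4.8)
The plan is to reduce the infinite-horizon game to a single-round existence statement: once Sceptic has announced $S_n$ on round $n$, Forecaster can choose $p_n\in\PPP(\Omega)$ so that $S_n(\omega,p_n)\le0$ for \emph{every} $\omega\in\Omega$ simultaneously. Granting this, Forecaster plays such a $p_n$ on every round; then, whatever $\omega_n$ Reality announces, $\K_n=\K_{n-1}+S_n(\omega_n,p_n)\le\K_{n-1}$, and the chain $\K_0\ge\K_1\ge\cdots$ follows by induction. So everything comes down to the following claim about a single $S:=S_n$: if $S:\Omega\times\PPP(\Omega)\to\bbbr$ is lower semicontinuous in its second argument and satisfies $\int_{\Omega}S(\omega,p)p(\dd\omega)\le0$ for all $p\in\PPPfin(\Omega)$, then there is $p^*\in\PPP(\Omega)$ with $S(\omega,p^*)\le0$ for all $\omega\in\Omega$.

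To prove the claim I would argue by compactness. For each $\omega\in\Omega$ put $F_\omega:=\{p\in\PPP(\Omega):S(\omega,p)\le0\}$. Since $S(\omega,\cdot)$ is lower semicontinuous, each $F_\omega$ is a closed subset of $\PPP(\Omega)$, and $\PPP(\Omega)$ is compact in the weak$^*$ topology (Banach--Alaoglu, via the Riesz representation cited in the text). The claim is exactly that $\bigcap_{\omega\in\Omega}F_\omega\neq\emptyset$, so by the finite-intersection characterization of compactness it suffices to show that every finite intersection $F_{\omega_1}\cap\cdots\cap F_{\omega_m}$ is nonempty. This is where I would confine attention to the finite-dimensional simplex $\Delta:=\{p\in\PPP(\Omega):p\text{ is concentrated on }\{\omega_1,\dots,\omega_m\}\}$, which lies in $\PPPfin(\Omega)$; the hypothesis then reads $\sum_{i=1}^{m}S(\omega_i,p)\,p(\{\omega_i\})\le0$ for every $p\in\Delta$, all integrals having collapsed to finite sums, so that no measurability of $S$ in $\omega$ is ever invoked (in line with the remark following the protocol).

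For the finite intersection I would invoke Ky Fan's minimax inequality. Define $\varphi:\Delta\times\Delta\to\bbbr$ by $\varphi(p,q):=\sum_{i=1}^{m}S(\omega_i,p)\,q(\{\omega_i\})$. Then $\Delta$ is compact and convex; for fixed $q$ the map $p\mapsto\varphi(p,q)$ is lower semicontinuous, being a nonnegative combination of the lower semicontinuous functions $S(\omega_i,\cdot)$; for fixed $p$ the map $q\mapsto\varphi(p,q)$ is linear, hence quasiconcave; and $\varphi(p,p)=\sum_i S(\omega_i,p)p(\{\omega_i\})\le0$ for all $p\in\Delta$. Fan's inequality then yields $p^*\in\Delta$ with $\sup_{q\in\Delta}\varphi(p^*,q)\le\sup_{p\in\Delta}\varphi(p,p)\le0$; evaluating at the vertices $q=\delta_{\omega_i}$ gives $S(\omega_i,p^*)\le0$ for each $i$, i.e.\ $p^*\in F_{\omega_1}\cap\cdots\cap F_{\omega_m}$.

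The main obstacle is the per-round existence, and within it the fact that $S$ is only lower semicontinuous in $p$: this is what rules out a direct Brouwer fixed-point argument (which would suffice were $S$ continuous) and is precisely the regularity Ky Fan's inequality is built to handle, since lower semicontinuity guarantees that the relevant minimax over the compact simplex is attained. The passage from finite outcome sets to a general compact $\Omega$ is comparatively routine once the closedness of each $F_\omega$ and the weak$^*$ compactness of $\PPP(\Omega)$ are in place, and it is also what frees the proof from any measurability hypothesis on $S$ in its first argument.
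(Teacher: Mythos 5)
Your proof is correct, and its architecture coincides with the paper's: the same per-round reduction, the same closed sets $F_\omega=\{p\in\PPP(\Omega)\st S(\omega,p)\le0\}$, the same appeal to weak$^*$ compactness of $\PPP(\Omega)$ plus the finite intersection property, and the same restriction to the finite-dimensional simplex of measures concentrated on $\{\omega_1,\dots,\omega_m\}$. The one genuine difference is the final ingredient: where you apply Ky Fan's minimax inequality to $\varphi(p,q)=\sum_i S(\omega_i,p)\,q(\{\omega_i\})$, the paper applies the KKM theorem directly, verifying its covering condition by the two-line observation that $\int S(\omega,p)\,p(\dd\omega)\le0$ forces $S(\omega_{m_j},p)\le0$ for some $j$. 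These are close cousins---Fan's inequality is standardly derived from KKM (they are Theorems 11.4 and 11.2 of the very reference \cite{agarwal/etal:2001} the paper cites)---so in effect you invoke a packaged corollary of the paper's lemma. What your route buys: you check Fan's three hypotheses once rather than a covering condition for every subset of vertices, and you need no simplex structure (no affine independence of the point masses $\delta_{\omega_i}$, which the paper's KKM step implicitly relies on), only compact convexity of $\Delta$. It also sharpens a remark in the paper, which says a proof via Ky Fan's minimax theorem exists only ``under stronger assumptions'' (\cite{\DFVIII}, Section 3, where it is applied globally on $\PPP(\Omega)$): by using Fan only on finite simplices and delegating the passage to general compact $\Omega$ to the finite intersection property, you retain the theorem's weak hypotheses---lower semicontinuity in $p$ only, the mean condition only on $\PPPfin(\Omega)$, and no measurability in $\omega$. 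Two small points: you should assume the $\omega_i$ distinct (harmless WLOG, but otherwise $\varphi(p,p)$ double-counts atoms and is no longer $\int S(\omega,p)\,p(\dd\omega)$), and lower semicontinuity enters as a hypothesis of Fan's inequality itself (paired with quasiconcavity in $q$) rather than merely guaranteeing that the minimax is attained, as your closing paragraph suggests.
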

In other words,
not only Sceptic does not detect
serious disagreement between Forecaster and Reality,
he does not detect any disagreement at all.

We will reproduce Levin's 
original proof,
as detailed by G\'acs \cite{gacs:2005}, Section 5;
for a different proof
(essentially a reference to Ky Fan's minimax theorem,
\cite{agarwal/etal:2001}, Theorem 11.4)
under stronger assumptions,
see \cite{\DFVIII}, Section 3.

A set $v_1,\ldots,v_M$ of points in a Euclidean (or topological vector) space
is \emph{affinely independent} if,
for all real numbers $\lambda_1,\ldots,\lambda_M$,
\begin{equation*}
  \sum_{m=1}^M \lambda_m v_m = 0
  \text{ and }
  \sum_{m=1}^M \lambda_m = 0
  \text{ imply }
  \lambda_1=\cdots=\lambda_M=0.
\end{equation*}
The convex hull of such $v_1,\ldots,v_M$,
denoted $\co(v_1,\ldots,v_M)$,
is called a \emph{simplex}
or, more fully, an \emph{$(M-1)$-dimensional simplex}.
The proof of Theorem \ref{thm:continuous}
will use the following result due to Knaster, Kuratowski, and Mazurkiewicz
(\cite{knaster/etal:1929}; see also \cite{agarwal/etal:2001}, Theorem 11.2).
\begin{Theorem}{KKM Theorem}
  Let $F_1,\ldots,F_M$ be closed subsets of a simplex $\co(v_1,\ldots,v_M)$.
  Suppose that for all
  $1\le k\le M$ and $1\le m_1\le\cdots\le m_k\le M$
  we have
  \begin{equation*}
    \co
    \left(
      v_{m_1},\ldots,v_{m_k}
    \right)
    \subseteq
    F_{m_1}\cup\cdots\cup F_{m_k}.
  \end{equation*}
  Then $F_1\cap\cdots\cap F_M\ne\emptyset$.
\end{Theorem}
\begin{Proof}{of Theorem \ref{thm:continuous}}
  Fix a round $n$ of the game and set $S:=S_n$.
  For every $\omega\in\Omega$,
  let $F_{\omega}$ be the closed set
  \begin{equation*}
    F_{\omega}
    :=
    \left\{
      p\in\PPP(\Omega)
      \st
      S(\omega,p)\le0
    \right\}.
  \end{equation*}
  It suffices to show that for every finite set of points $\omega_1,\ldots,\omega_M$
  we have
  \begin{equation}\label{eq:nonempty}
    F_{\omega_1} \cap\cdots\cap F_{\omega_M}
    \ne
    \emptyset.
  \end{equation}
  Indeed, the compactness of $\Omega$
  implies the compactness of $\PPP(\Omega)$
  (combine Alaoglu's theorem, Problem 9 in Section 6.1 of \cite{dudley:2002},
  with the weak$^*$ closeness of $\PPP(\Omega)$ in $C(\Omega)'$,
  following from \cite{dudley:2002}, Theorems 7.1.5 and 2.6.3).
  Therefore, if every finite subset of the family
  $\{F_{\omega}\st \omega\in\Omega\}$
  of closed sets
  has a non-empty intersection,
  then the whole family has a nonempty intersection,
  and any of the measures in this intersection can be taken as $p_n$.

  To show (\ref{eq:nonempty}),
  let $\PPP(\omega_1,\ldots,\omega_M)$ be the set of probability measures
  concentrated on $\{\omega_1,\ldots,\omega_M\}$.
  If $p\in\PPP(\omega_1,\ldots,\omega_M)$,
  the inequality $\int S(\omega,p)p(\dd\omega)\le0$
  implies $S(\omega_m,p)\le0$ for some $m\in\{1,\ldots,M\}$.
  Hence $\PPP(\omega_1,\ldots,\omega_M)\subseteq F_{\omega_1}\cup\cdots\cup F_{\omega_M}$,
  and the same holds for every subset of the indices $\{1,\ldots,M\}$.
  The KKM theorem now implies (\ref{eq:nonempty}).
  \qedtext 
\end{Proof}

\section{Randomized defensive forecasting}\label{sec:randomized}

Let $\PPPfin(\PPP(\Omega))$ be the set of all probability measures on $\PPP(\Omega)$
concentrated on a finite subset of $\PPP(\Omega)$.
For each $P\in\PPPfin(\PPP(\Omega))$,
let $\D(P)\subseteq\PPP(\Omega)$ be the smallest finite set in $\PPP(\Omega)$ of $P$-probability one.

Our result about randomized defensive forecasting
concerns the following perfect-information game involving four players:

\bigskip

\noindent
\textsc{Randomized game}

\smallskip

\noindent
\textbf{Players:} Sceptic, Forecaster, Reality, Random Number Generator

\smallskip

\noindent
\textbf{Protocol:}

\parshape=12
\IndentI   \WidthI
\IndentI   \WidthI
\IndentI   \WidthI
\IndentII  \WidthII
\IndentIII \WidthIII
\IndentIII \WidthIII
\IndentII  \WidthII
\IndentII  \WidthII
\IndentII  \WidthII
\IndentII  \WidthII
\IndentII  \WidthII
\IndentII  \WidthII
\noindent
$\K_0 := 1$.\\
$\F_0 := 1$.\\
FOR $n=1,2,\dots$:\\
  Sceptic announces a function $S_n:\Omega\times\PPP(\Omega)\to\bbbr$\\
    which is continuous in the first argument $\omega\in\Omega$\\
    and satisfies $\int_{\Omega}S_n(\omega,p)p(\dd\omega)\le0$ for all $p\in\PPP(\Omega)$.\\
  Forecaster announces $P_n\in\PPPfin(\PPP(\Omega))$.\\
  Reality announces $\omega_n\in\Omega$.\\
  Forecaster announces a function $f_n:\PPP(\Omega)\to\bbbr$
    such that $\int_{\PPP(\Omega)} f_n \dd P_n\le0$.\\
  Random Number Generator announces $p_n\in\D(P_n)$.\\
  $\K_n := \K_{n-1} + S_n(\omega_n,p_n)$.\\
  $\F_n := \F_{n-1} + f_n(p_n)$.

\smallskip

\hangindent=7mm
\noindent
\textbf{Restriction on Sceptic:}
Sceptic must choose $S_n$
(continuous, and so Baire measurable, in its first argument)
so that his capital remains nonnegative
($\K_n \ge 0$) no matter how the other players move
(in particular, the function $S_n$ must be bounded below).

\smallskip

\hangindent=7mm
\noindent
\textbf{Restriction on Forecaster:}
Forecaster must choose his moves
so that his capital remains nonnegative
($\F_n \ge 0$) no matter how the other players move.

\smallskip

\hangindent=7mm
\noindent
\textbf{Winner:}
Forecaster wins if either (i) his capital $\F_n$ tends to infinity
or (ii) Sceptic's capital $\K_n$ stays bounded.

\bigskip

\noindent
(Since $\int_{\PPP(\Omega)} f_n \dd P_n=\int_{\D(P_n)} f_n \dd P_n$ is a sum,
its existence does not depend on the measurability of $f_n$.
However, by the Tietze--Urysohn theorem, Theorem 2.1.8 in \cite{engelking:1989},
$f_n$ can be chosen continuous and, therefore, Baire measurable;
the Tietze--Urysohn theorem is applicable since every compact is normal,
\cite{engelking:1989}, Theorem 3.1.9.)

Forecaster is now allowed to randomize,
and it is Random Number Generator who picks the actual forecast $p_n$
from Forecaster's randomized forecast $P_n$.
As before,
Sceptic is testing the forecasts $p_n$ by gambling against them.
To make sure that Random Number Generator performs his duty
of producing random-looking $p_n$,
Forecaster is allowed to gamble against Random Number Generator's choices.
Forecaster wins the game if he either discredits Random Number Generator
or Sceptic does not detect serious disagreement between the forecasts and the outcomes.

In the case of finite $\Omega$,
the only restriction on Sceptic's move $S_n$
is $\int_{\Omega}S_n(\omega,p)p(\dd\omega)\le0$, $\forall p\in\PPP(\Omega)$.
We will see that in this case
Theorem \ref{thm:randomized} will remain true
even if $f_n$ is required to be a linear function
on the simplex $\PPP(\Omega)$.

The following is the randomized counterpart of Theorem \ref{thm:continuous}.
\begin{theorem}\label{thm:randomized}
  For any $\epsilon>0$
  and any sequence $\AAA_1,\AAA_2,\ldots$ of open covers of the outcome space $\Omega$,
  Forecaster has a strategy in the randomized game
  that guarantees:
  \begin{itemize}
  \item
    $\K_n \le (1+\epsilon)\F_n$ for each $n$;
  \item
    $\D(P_n)$ lies completely in one element of $\AAA_n$;
  \item
    $
      \lvert \D(P_n)\rvert
      \le
      \lvert\Omega\rvert
    $.
  \end{itemize}
\end{theorem}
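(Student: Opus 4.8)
The plan is to maintain the invariant $\Delta_n:=(1+\epsilon)\F_n-\K_n\ge0$ and to spend the initial slack $\Delta_0=(1+\epsilon)\cdot1-1=\epsilon$ to pay for the errors that blurring Sceptic's move inevitably introduces. Fix once and for all a sequence $\epsilon_1,\epsilon_2,\dots>0$ with $\sum_n\epsilon_n\le\epsilon$. First I would reduce the theorem to a one-round existence statement: it suffices to produce, on each round $n$ (knowing $S_n$), a forecast $P_n\in\PPPfin(\PPP(\Omega))$ obeying the second and third bullets and such that the continuous ``readout'' $r_n(\omega):=\int_{\PPP(\Omega)}S_n(\omega,p)\,P_n(\dd p)$ satisfies $r_n(\omega)\le\epsilon_n$ for all $\omega$. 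Indeed, once Reality reveals $\omega_n$, let Forecaster play $f_n(p):=\frac{1}{1+\epsilon}\bigl(S_n(\omega_n,p)-r_n(\omega_n)\bigr)$ on $\D(P_n)$ (extended, by Tietze--Urysohn, to a continuous function on $\PPP(\Omega)$; in the finite case extend it affinely, so that $f_n$ is linear on the simplex). Then $\int f_n\,\dd P_n=0$, so $f_n$ is legal, and for every $p_n\in\D(P_n)$ one has $S_n(\omega_n,p_n)-(1+\epsilon)f_n(p_n)=r_n(\omega_n)$, whence $\Delta_n=\Delta_{n-1}-r_n(\omega_n)\ge\Delta_{n-1}-\epsilon_n$. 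Summing gives $\Delta_n\ge\epsilon-\sum_{k\le n}\epsilon_k\ge0$, i.e.\ the first bullet; combined with $\K_n\ge0$ this forces $\F_n\ge0$, so Forecaster's restriction holds automatically.

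\textbf{Blurring Sceptic's move.} To build such a $P_n$ I follow the proof of Theorem \ref{thm:continuous} but first make $S:=S_n$ lower semicontinuous in $p$. Choose finitely many centres $v\in\PPPfin(\Omega)$ and a subordinate partition of unity $\{\phi_v\}$ on $\PPP(\Omega)$ coming from a fine simplicial subdivision (in the finite case, of the simplex $\PPP(\Omega)$; in general, of a finite-dimensional reduction obtained from a finite subcover of $\AAA_n$, which exists by compactness of $\Omega$), fine enough that each cell lies in one element of $\AAA_n$. Set $\tilde S(\omega,p):=\sum_v\phi_v(p)\,S(\omega,v)$. This is continuous in $p$ (the $\phi_v$ are continuous) and continuous in $\omega$ (a finite sum of the continuous $S(\cdot,v)$). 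Since $\int_\Omega S(\omega,v)\,v(\dd\omega)\le0$ and the finitely many $S(\cdot,v)$ are continuous on the compact $\Omega$, choosing the subdivision fine enough (so that $\phi_v(p)>0$ forces $\lvert\int_\Omega S(\omega,v)\,(p-v)(\dd\omega)\rvert$ to be small) yields $\int_\Omega\tilde S(\omega,p)\,p(\dd\omega)\le\epsilon_n$ for all $p\in\PPPfin(\Omega)$.

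\textbf{Applying Theorem \ref{thm:continuous} and reading off $P_n$.} Now $\tilde S-\epsilon_n$ is lower semicontinuous in $p$ and satisfies the budget inequality of the continuous game, so the one-round content of Theorem \ref{thm:continuous} (the KKM argument together with the compactness of $\PPP(\Omega)$) supplies a single $p^\ast\in\PPP(\Omega)$ with $\tilde S(\omega,p^\ast)\le\epsilon_n$ for all $\omega$. Put $P_n:=\sum_v\phi_v(p^\ast)\,\delta_v$. Then $r_n(\omega)=\int S(\omega,p)\,P_n(\dd p)=\tilde S(\omega,p^\ast)\le\epsilon_n$, as required. Moreover $\D(P_n)=\{v:\phi_v(p^\ast)>0\}$ is the vertex set of the single cell containing $p^\ast$: these number at most $\lvert\Omega\rvert$ (a cell of the $(\lvert\Omega\rvert-1)$-dimensional simplex has at most $\lvert\Omega\rvert$ vertices), giving the third bullet, and they lie in one element of $\AAA_n$, giving the second.

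\textbf{Where the difficulty lies.} The delicate step is the blurring estimate. Any genuine spreading of the move violates the budget constraint by a strictly positive amount, so one can never achieve the exact readout $r_n\le0$ that a naive reduction would demand; this is precisely why the multiplicative slack in the first bullet is indispensable. The whole scheme works only because that slack endows Forecaster with a finite error budget $\sum_n\epsilon_n\le\epsilon$, and the crux is to choose, round by round, a subdivision fine enough to push the budget violation of $\tilde S_n$ below $\epsilon_n$ while simultaneously keeping the centres finite and the cells both small (for the second bullet) and of dimension $\le\lvert\Omega\rvert-1$ (for the third). For infinite $\Omega$ the simplex is replaced by the finite-dimensional reduction and the third bullet becomes vacuous.
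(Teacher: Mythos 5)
Your overall architecture matches the paper's: blur Sceptic's move with a partition of unity, apply the one-round KKM/compactness content of Theorem \ref{thm:continuous} to the blurred move to get $p^*$, read off $P_n$ as the partition-of-unity weights at $p^*$, and pay for the blurring error out of the multiplicative slack. Your accounting step is correct and is even a slightly cleaner variant of the paper's: you subtract the exact readout $r_n(\omega_n)$ so that $\int f_n\,\dd P_n=0$ and run a summable budget $\sum_n\epsilon_n\le\epsilon$, where the paper uses the fixed per-round slack $\epsilon 2^{-n}$ in $f_n(p)=\frac{1}{1+\epsilon}\left(S_n(\omega_n,p)-\epsilon 2^{-n}\right)$; both give $\K_n\le(1+\epsilon)\F_n$ and hence $\F_n\ge0$. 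The genuine gap is in the blurring estimate. You sample $S_n$ at the vertices $v$ of a simplicial subdivision and claim that choosing the subdivision ``fine enough'' forces $\lvert\int_\Omega S(\omega,v)\,(p-v)(\dd\omega)\rvert$ to be small whenever $\phi_v(p)>0$. But in the randomized game $S_n$ carries no continuity or boundedness in $p$ whatsoever --- that is the defining feature of this variety --- so there is no uniform modulus: how fine the subdivision must be near $v$ depends on $S_n(\cdot,v)$, i.e., on $v$ itself. Concretely, take $\Omega=\{0,1\}$ and $S_n(1,p)=c_p(1-p)$, $S_n(0,p)=-c_p p$ with $c_p>0$ arbitrary (the budget constraint holds with equality for every $p$); then $\int_\Omega S_n(\omega,v)\,p(\dd\omega)=c_v(p-v)$, so the star of $v$ must have radius less than $\epsilon_n/c_v$. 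If $c_p$ is unbounded on every interval, then every prescribed mesh is too coarse at some vertex, and refining the subdivision relocates the vertices and restarts the problem --- a chicken-and-egg loop with no guaranteed termination.

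The paper circumvents exactly this by never quantifying fineness through a mesh: it defines, for \emph{every} $p$, the open set $A_p=\left\{q\st\int_\Omega S_n(\omega,p)\,q(\dd\omega)<\delta\right\}$ (open precisely because $S_n(\cdot,p)\in C(\Omega)$ --- the one place continuity in $\omega$ is used), notes $p\in A_p$, refines this cover (through a star refinement of $\AAA_n$, which is what secures the second bullet), takes a partition of unity $\{f_s\}$ subordinated to it, and only \emph{then} chooses the sample point $p_s$ so that $\{p\st f_s(p)>0\}\subseteq B_{p_s}\subseteq A_{p_s}$; the inequality $\int_\Omega S_n(\omega,p_s)\,p(\dd\omega)<\delta$ then holds by construction rather than by any modulus-of-continuity argument. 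Your proof would be repaired by the same decoupling of sample points from vertices. Two secondary points: for infinite $\Omega$ your ``finite-dimensional reduction obtained from a finite subcover'' is not an argument ($\PPP(\Omega)$ is infinite-dimensional; the paper instead works in $\PPP(\Omega)$ itself, using its paracompactness, star refinements, and subordinated partitions of unity); and your count of at most $\lvert\Omega\rvert$ vertices per carrier cell is an attractive elementary substitute, in the finite case, for the paper's appeal to the Dowker shrinking theorem together with $\dim(\PPP(\Omega))=\lvert\Omega\rvert-1$, but as stated it rests on the broken vertex-sampling scheme, and after the repair one must re-derive both the cardinality bound and the second bullet for the relocated sample points, which is what the paper's shrinking-plus-star-refinement construction accomplishes.
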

The last item,
$
  \lvert \D(P_n)\rvert
  \le
  \lvert\Omega\rvert
$,
is of interest only in the case of finite $\Omega$:
it holds trivially when $\Omega$ is infinite.

Before discussing the intuition behind Theorem \ref{thm:randomized}
we restate the second item in a more intuitive form
assuming that $\Omega$ is finite
and $\dist$ is the Euclidean distance on the simplex $\PPP(\Omega)$.
(More generally, $\Omega$ can be assumed a compact metric space
and $\dist$ be, e.g., the Prokhorov metric on $\PPP(\Omega)$;
see, e.g., \cite{billingsley:1968}, Appendix III, Theorem 6.)
\begin{corollary}\label{cor:randomized}
  Suppose $\Omega$ is finite (or a metric compact).
  For any $\epsilon>0$
  and any sequence $\epsilon_1,\epsilon_2,\ldots$ of positive real numbers,
  Forecaster has a strategy in the randomized game
  that guarantees:
  \begin{itemize}
  \item
    $\K_n \le (1+\epsilon)\F_n$ for each $n$;
  \item
    the diameter of $\D(P_n)$ is at most $\epsilon_n$:
    \begin{equation*}
      \diam \D(P_n)
      :=
      \sup_{p,q\in\D(P_n)}
      \dist(p,q)
      =
      \max_{p,q\in\D(P_n)}
      \dist(p,q)
      \le
      \epsilon_n;
    \end{equation*}
  \item
    $
      \lvert\D(P_n)\rvert
      \le
      \lvert\Omega\rvert
    $.
  \end{itemize}
\end{corollary}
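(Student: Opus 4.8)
The plan is to obtain Corollary \ref{cor:randomized} as a direct specialization of Theorem \ref{thm:randomized}, the point being that membership of $\D(P_n)$ in a single element of a cover upgrades to the quantitative bound $\diam\D(P_n)\le\epsilon_n$ once the elements of the cover are chosen to be small sets in the metric on $\PPP(\Omega)$. The only ingredient beyond Theorem \ref{thm:randomized} is the metrizability of $\PPP(\Omega)$: when $\Omega$ is finite, $\PPP(\Omega)$ is the Euclidean simplex and $\dist$ is the Euclidean metric; when $\Omega$ is a compact metric space, $\PPP(\Omega)$ equipped with the weak$^*$ topology used throughout is again a compact metric space under the Prokhorov metric (\cite{billingsley:1968}, Appendix III, Theorem 6). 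In either case the topology of $\PPP(\Omega)$ coincides with the metric topology, so balls of prescribed radius make sense and open covers by such balls are legitimate inputs to Theorem \ref{thm:randomized}.

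Concretely, I would fix $\epsilon>0$ and the sequence $\epsilon_1,\epsilon_2,\dots>0$, and for each $n$ take the cover of $\PPP(\Omega)$ given by all open balls of radius $\epsilon_n/2$,
\[
  \AAA_n := \left\{ B(p,\epsilon_n/2) \st p\in\PPP(\Omega) \right\},
  \qquad
  B(p,r):=\left\{ q\in\PPP(\Omega) \st \dist(p,q)<r \right\}.
\]
Each $\AAA_n$ is an open cover (every point lies in the ball centred at it), and by the triangle inequality each member has diameter at most $\epsilon_n$. Feeding $\epsilon$ and $\AAA_1,\AAA_2,\dots$ into Theorem \ref{thm:randomized} produces a Forecaster strategy, and I would then read off the three conclusions. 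The first item $\K_n\le(1+\epsilon)\F_n$ and the third item $\lvert\D(P_n)\rvert\le\lvert\Omega\rvert$ are inherited verbatim. For the second item, Theorem \ref{thm:randomized} guarantees that $\D(P_n)$ is contained in a single ball $B(p,\epsilon_n/2)$, whence $\dist(p',q')\le\dist(p',p)+\dist(p,q')<\epsilon_n$ for all $p',q'\in\D(P_n)$, giving $\diam\D(P_n)\le\epsilon_n$. Since $\D(P_n)$ is finite and nonempty the supremum in the definition of $\diam$ is attained, so $\sup$ may be replaced by $\max$ exactly as in the statement.

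The corollary is therefore pure bookkeeping on top of Theorem \ref{thm:randomized}, and the only point requiring care is conceptual rather than computational: the diameter we wish to control lives on $\PPP(\Omega)$, where $\D(P_n)$ resides, so the covers driving the argument must be covers of $\PPP(\Omega)$ (by small metric balls), and one must invoke the metrizability of $\PPP(\Omega)$ — the Euclidean structure in the finite case, the Prokhorov metric in the compact-metric case — to make ``ball of radius $\epsilon_n/2$'' meaningful. No new game-theoretic construction is needed; the entire force of the result is already supplied by Theorem \ref{thm:randomized}.
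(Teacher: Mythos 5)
Your proof is correct and is exactly the intended derivation: the paper states Corollary \ref{cor:randomized} as an immediate restatement of Theorem \ref{thm:randomized} with no separate proof (citing the same Prokhorov-metric fact from \cite{billingsley:1968}), and covering $\PPP(\Omega)$ by open balls of radius $\epsilon_n/2$, so that each cover element has diameter at most $\epsilon_n$ and the $\sup$ over the finite set $\D(P_n)$ is attained, is precisely the bookkeeping the paper leaves to the reader. You were also right to insist that the covers $\AAA_n$ must be covers of $\PPP(\Omega)$ rather than of $\Omega$ as the wording of Theorem \ref{thm:randomized} literally says: since $\D(P_n)\subseteq\PPP(\Omega)$, and since the paper's proof of that theorem (the star refinement $\BBB$, the locally finite refinement $\CCC$, and the partition of unity) manifestly operates on $\PPP(\Omega)$, this is how the theorem must be read for the corollary to follow.
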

The condition $\K_n \le (1+\epsilon)\F_n$
says that Forecaster can guarantee $\F_n \ge \K_n$ to any approximation required,
i.e., every pound gained by Sceptic can be attributed
to the poor performance of Random Number Generator.
The condition $\diam\D(P_n)\le\epsilon_n$
shows that already a tiny amount of randomization is sufficient;
as already mentioned,
a similar observation was made by Kakade and Foster \cite{kakade/foster:2004}.

\begin{Proof}{of Theorem \ref{thm:randomized}}
  We will repeatedly use the fact that $\PPP(\Omega)$ is paracompact
  (\cite{engelking:1989}, Theorem 5.1.1).
  The stronger condition that $\PPP(\Omega)$ is compact
  will only be used in a reference to Theorem \ref{thm:continuous}.

  Fix a round $n$ of the game.
  Let $\delta>0$ be a small constant
  (how small will be determined later).
  For each $p\in\PPP(\Omega)$ set
  \begin{equation}\label{eq:A}
    A_p
    :=
    \left\{
      q\in\PPP(\Omega)
      \st
      \int_{\Omega} S_n(\omega,p) q(\dd\omega)
      <
      \delta
    \right\};
  \end{equation}
  notice that $p\in A_p$
  and that $A_p$ is an open set.
  Let $\BBB$ be any open star refinement of $\AAA_n$
  (it exists by \cite{engelking:1989}, Theorem 5.1.12, (i) and (iii)),
  let $\CCC$ be any locally finite open refinement of $\BBB$
  (it exists by the definition of paracompactness),
  and let $B_p$ be the intersection of $A_p$
  with an arbitrary element of $\CCC$ containing $p$.
  Notice that the $B_p$ form an open cover of $\PPP(\Omega)$.
  If $\Omega$ is finite,
  replace $\{B_p\}_{p\in\PPP(\Omega)}$
  by its open shrinking of order $\lvert\Omega\rvert-1$
  (it exists by the Dowker theorem, Theorem 7.2.4 in \cite{engelking:1989},
  since $\Omega$ is normal, Theorem 3.1.9 in \cite{engelking:1989},
  and $\dim(\PPP(\Omega))=\lvert\Omega\rvert-1$,
  \cite{engelking:1989}, Theorem 7.3.19);
  we will use the same notation $\{B_p\}_{p\in\PPP(\Omega)}$ for the shrinking.
  Let $\{f_s\}_{s\in S}$ be a locally finite partition of unity
  subordinated to the open cover $\{B_p\}_{p\in\PPP(\Omega)}$
  (\cite{engelking:1989}, Theorem 5.1.9).
  For each $s\in S$ choose a $p_s\in\PPP(\Omega)$
  such that $\{p\st f_s(p)>0\}\subseteq B_{p_s}$.
  Set, for $\omega\in\Omega$ and $p\in\PPP(\Omega)$,
  \begin{equation*}
    S^*(\omega,p)
    :=
    \sum_{s\in S}
    S_n(\omega,p_s)
    f_s(p)
  \end{equation*}
  (notice that only a finite number of addends are non-zero,
  so the sum is well-defined).

  In the previous section we were considering
  Sceptic's moves $S_n(\omega,p)$ lower semicontinuous in $p$ and satisfying
  $\int_{\Omega}S_n(\omega,p)p(\dd\omega)\le0$ for all $p\in\PPPfin(\Omega)$.
  It is clear that $S^*(\omega,p)$ is even continuous in $p$;
  let us check that it almost satisfies
  $\int_{\Omega}S^*(\omega,p)p(\dd\omega)\le0$ for all $p\in\PPP(\Omega)$.
  We have:
  \begin{multline}\label{eq:approximate}
    \int_{\Omega}
    S^*(\omega,p)
    p(\dd\omega)
    =
    \int_{\Omega}
    \sum_{s\in S_p}
    S_n(\omega,p_s)
    f_s(p)
    p(\dd\omega)\\
    =
    \sum_{s\in S_p}
    \int_{\Omega}
    S_n(\omega,p_s)
    p(\dd\omega)
    f_s(p)
    \le
    \sum_{s\in S_p}
    \delta
    f_s(p)
    =
    \delta,
  \end{multline}
  where $S_p$ is the finite set of all $s$ for which $f_s(p)>0$;
  the inequality in (\ref{eq:approximate}) uses the fact that $p\in B_{p_s}\subseteq A_{p_s}$
  and the definition (\ref{eq:A}).
  Therefore,
  $\int_{\Omega}S(\omega,p)p(\dd\omega)\le0$ for all $p$,
  where $S:=S^*-\delta$.
  Applying to $S$ the argument given in the proof of Theorem \ref{thm:continuous},
  we can see that there exists $p^*\in\PPP(\Omega)$
  satisfying $S(\omega,p^*)\le0$,
  i.e., $S^*(\omega,p^*)\le\delta$,
  for all $\omega\in\Omega$.

  Make Forecaster select $P_n$ concentrated on the $p_s$ with positive $f_s(p^*)$
  and assigning weight $f_s(p^*)$ to each of these $p_s$.
  This will ensure that $P_n$ is concentrated on a finite subset, $\D(P_n)$,
  of an element of $\AAA_n$
  and that $\lvert\D(P_n)\rvert\le\lvert\Omega\rvert$.

  The rest of the proof proceeds similarly to the proof of Theorem 3 in \cite{\DFI}.
  Let $\delta$ be $\epsilon 2^{-n}$ or less.
  This will ensure
  \begin{equation}\label{eq:minimax}
    \int
      S_n(\omega,p)
    P_n(\dd p)
    \le
    \epsilon 2^{-n}
  \end{equation}
  for all $\omega\in\Omega$.
  Let Forecaster's strategy further tell him
  to use as his second move the function $f_n$ given by
  \begin{equation}\label{eq:f}
    f_n(p)
    :=
    \frac{1}{1+\epsilon}
    \left(
      S_n(\omega_n,p)
      -
      \epsilon 2^{-n}
    \right)
  \end{equation}
  for $p \in \D(P_n)$ and defined arbitrarily for $p \notin \D(P_n)$.
  The condition $\int f_n \dd P_n\le0$ is then guaranteed by~(\ref{eq:minimax}).

  It remains to check $\K_n \le (1+\epsilon) \F_n$
  (this will also establish that $\F_n$ is never negative).
  This can be done by a formal calculation
  (as in the proof of Theorem 3 in \cite{\DFI}),
  but I prefer the following intuitive picture.
  We would like Forecaster to use
  $
    f_n(p)
    :=
    S_n(\omega_n,p)
    -
    \epsilon 2^{-n}
  $
  (for $p\in\D(P_n)$)
  as his second move;
  this would always keep his capital $\F_n$
  above $\K_n-\epsilon$.
  To make sure that $\F_n$ is never negative,
  Forecaster would have to start with initial capital $\F_0=1+\epsilon$,
  which, moreover, would lead to $\F_n\ge\K_n$, $\forall n$;
  our protocol, however, requires $\F_0=1$.
  Therefore,
  Forecaster's strategy has to be scaled down to the initial capital $1$,
  leading to (\ref{eq:f});
  $\F_n\ge\K_n$ becomes $(1+\epsilon)\F_n\ge\K_n$.
  (Scaling down a strategy to a smaller initial capital
  means that the player multiplies the strategy's moves
  by the same factor as he has multiplied the initial capital,
  thus assuring that the capital on succeeding rounds
  is also multiplied by this factor.)
  \qedtext
\end{Proof}

\begin{corollary}
  Forecaster has a winning strategy
  in the randomized game.
\end{corollary}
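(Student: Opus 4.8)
The plan is to deduce the corollary from Theorem~\ref{thm:randomized} by reusing its forecast $P_n$ but replacing Forecaster's second move $f_n$ by a mixture of ``fluctuation-harvesting'' bets. The point is that a single strategy guaranteeing $\K_n\le(1+\epsilon)\F_n$ is \emph{not} by itself enough: that inequality (in fact one checks $(1+\epsilon)\F_n=\K_n+\epsilon 2^{-n}$) pins $\F_n$ to $\K_n$, so if the opponents make $\K_n$ oscillate with $\limsup\K_n=\infty$ but $\liminf\K_n<\infty$, then $\F_n$ oscillates too and neither winning clause applies. So the real content is upgrading ``$\sup_n\K_n=\infty$'' to ``$\F_n\to\infty$''.

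First I would fix $\epsilon=1$ and the trivial covers $\AAA_n=\{\Omega\}$ and run only the \emph{first} move of the strategy of Theorem~\ref{thm:randomized}. On each round $n$ this supplies a single $P_n\in\PPPfin(\PPP(\Omega))$ with finite support $\D(P_n)$ such that $\int_{\Omega}S_n(\omega,p)\,P_n(\dd p)\le 2^{-n}$ for every $\omega$, as in~(\ref{eq:minimax}). Hence $h_n(p):=S_n(\omega_n,p)-2^{-n}$, which Forecaster knows once $\omega_n$ is announced, satisfies $\int h_n\,\dd P_n\le0$; any nonnegative multiple $C_n h_n$ is therefore a legal second move, and Random Number Generator's draw $p_n\in\D(P_n)$ realizes the per-unit gain $h_n(p_n)=(\K_n-\K_{n-1})-2^{-n}$. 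Since this one $P_n$ (and hence the single shared $p_n$) serves every purpose, Forecaster may run countably many virtual accounts in parallel, each betting a predictable nonnegative amount $c_{k,n}$ in the common direction $h_n$, and announce $f_n:=\bigl(\sum_k w_k c_{k,n}\bigr)h_n$; the constraint $\int f_n\,\dd P_n\le0$ is automatic, the realized capital is $\F_n=\sum_k w_k\F_n^{(k)}$, and $\F_n\ge0$ as soon as each account stays nonnegative.

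This reduces the corollary to a standard game-theoretic fact about $U_n:=\K_n-\sum_{i\le n}2^{-i}$, which is unbounded above exactly when $\K_n$ is: with the ability each round to hold any predictable nonnegative position in the increment $U_n-U_{n-1}$ subject to nonnegative capital, Forecaster can force some account's capital to infinity on $\{\sup_n U_n=\infty\}$. If instead $\sup_n\K_n<\infty$, then $\K_n$ stays bounded and Forecaster wins at once by clause~(ii). So I may assume $\sup_n U_n=\infty$ and take one buy-and-hold account (which diverges if $U_n\to\infty$) together with, for each integer $a\ge0$, an account that holds the fixed small position $1/(a+1)$ during up-crossings of $[a,a+1]$ and sits out otherwise, banking at least $1/(a+1)$ per completed up-crossing. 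In the oscillatory case $\limsup_n U_n=\infty$ with $\liminf_n U_n=:L<\infty$, every integer $a>L$ is up-crossed infinitely often, so the corresponding account diverges; mixing all accounts with positive weights $w_k$ summing to $1$ gives $\F_n\ge w_{k_0}\F_n^{(k_0)}\to\infty$, and Forecaster wins by clause~(i).

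The main obstacle is the bookkeeping that makes the mixture a single legal move with $\F_n\ge0$ throughout: one must bound each account's exposure (the position $1/(a+1)$ is chosen precisely so that account $a$'s drawdown while holding, from an entry at $U\le a$ down to the floor $U_n\ge-1$, never exceeds its banked capital, keeping $\F_n^{(a)}\ge0$), verify that $\sum_k w_k c_{k,n}$ is finite and predictable, and make the up-crossing count rigorous so that mere unboundedness of $\K_n$ is upgraded to genuine divergence of $\F_n$. Everything else is the first move of Theorem~\ref{thm:randomized} reused verbatim, so I expect this nonnegativity-and-divergence step to be the only delicate part.
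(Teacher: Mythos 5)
Your proposal is correct, and at the crucial step it takes a genuinely different route from the paper. Both arguments share the same decomposition: the real content of the corollary is upgrading ``$\K_n$ unbounded'' to ``$\F_n\to\infty$'', and you diagnose precisely why the raw guarantee of Theorem~\ref{thm:randomized} does not suffice (indeed $(1+\epsilon)\F_n=\K_n+\epsilon 2^{-n}$ for the strategy constructed there, so an oscillating $\K_n$ defeats both winning clauses) --- the paper's own proof makes the same reduction explicit. Where you differ is in how the upgrade is done. The paper treats the Theorem~\ref{thm:randomized} strategy as a black box and applies Shen's set-aside-and-rescale construction: play until the capital exceeds $2$, bank $1$, rescale the continuation to the remaining capital, and repeat; the banked money grows without bound whenever the original capital is unbounded. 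You instead open the box: you keep the first move $P_n$ (so that (\ref{eq:minimax}) holds with $\epsilon=1$) but rebuild the second move as a countable convex mixture of accounts, all betting predictable nonnegative multiples of the single direction $h_n(p)=S_n(\omega_n,p)-2^{-n}$ --- one buy-and-hold account plus, for each integer level $a\ge0$, an upcrossing account with position $1/(a+1)$ calibrated against the floor $U_n\ge-1$ (which comes from Sceptic's restriction $\K_n\ge0$, just as the paper's verification that $\F_n\ge0$ does). The mixture is legal because the constraint $\int f_n\,\dd P_n\le0$ is preserved under nonnegative combinations, your case analysis ($U_n\to\infty$, oscillation, bounded) is exhaustive, and the drawdown bound does keep every account nonnegative, so the argument goes through. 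What each approach buys: the paper's construction is shorter and strategy-agnostic (it converts \emph{any} legal Forecaster strategy with unbounded capital into one whose capital tends to infinity, needing no linear structure of the bets), whereas yours is additive rather than multiplicative, self-contained at the level of individual bets, and quantitatively explicit in the oscillatory case (at least $1/(a+1)$ banked per completed upcrossing of $[a,a+1]$); in spirit it is closer to the ``more efficient'' alternative constructions the paper alludes to (Shafer and Vovk 2001, Lemma 3.1) than to the doubling trick it actually uses.
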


\begin{proof}
  We are required to show that for every legal strategy $\SSS$ for Forecaster,
  we can construct another legal strategy $\SSS^*$ such that whenever $\SSS$'s capital
  is unbounded, $\SSS^*$'s tends to infinity.
  I will reproduce a simple construction (which I learned from Shen)
  given in \cite{\DFI}, the proof of Theorem 3.
  (For a more efficient, in certain respects, construction
  see \cite{shafer/vovk:2001}, Lemma 3.1;
  an even better construction has been recently devised by Vereshchagin and Shen.)

  We choose some number larger than $1$, say $2$.
  Starting, as the game requires,
  with initial capital $1$ for Forecaster,
  we have him play $\SSS$ until its capital exceeds $2$.
  Then he sets aside $1$ of this capital and continues with a rescaled version of $\SSS$,
  scaled down to the reduced capital. 
  When the capital again exceeds $2$, he again sets aside $1$, and so forth.
  The money set aside,
  which is part of the capital earned by this strategy, grows without bound.
  \qedtext
\end{proof}

Theorem \ref{thm:randomized} imposes a condition of continuity on Sceptic's move $S_n$
whereas Theorem \ref{thm:continuous} only requires lower semicontinuity
(in a different argument).
A natural question is whether we can relax the former condition.
The key point in the proof of Theorem \ref{thm:randomized}
where the continuity of $S_n$ in $\omega$ is used
is the claim that the set (\ref{eq:A}) is open.
This claim will still be true
if $S_n$ is only required to be upper semicontinuous in $\omega$,
at least when $\Omega$ is a metric compact.
We did not pursue this generalization
since it can be deduced from Theorem \ref{thm:randomized} as a corollary
(Corollary \ref{cor:upper-semicontinuous} below).

Let us say that a real-valued function $f$ on $\Omega$ is \emph{strongly upper semicontinuous}
if there is a monotonic sequence of bounded above real-valued functions $f_1\ge f_2\ge\cdots$ on $\Omega$
that converges to $f$ everywhere.
For 
metric compacts,
this requirement coincides with upper semicontinuity
(\cite{engelking:1989}, Problems 1.7.15(c) and 3.12.23(g)),
but in general it is stronger
(\cite{engelking:1989}, Problems 1.7.14(a), 1.7.15(c), and 3.12.23(g)).

\begin{corollary}\label{cor:upper-semicontinuous}
  Theorem \ref{thm:randomized} will continue to hold
  if the condition that $S_n(\omega,p)$
  be continuous in $\omega\in\Omega$
  in the randomized game
  is relaxed to the condition that $S_n(\omega,p)$
  be strongly upper semicontinuous in $\omega\in\Omega$.
\end{corollary}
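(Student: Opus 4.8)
The plan is to reuse the proof of Theorem~\ref{thm:randomized} almost verbatim, isolating the one place where continuity of $S_n$ in $\omega$ was used and re-establishing the needed property under strong upper semicontinuity. In that proof the continuity of $S_n(\cdot,p)$ entered \emph{only} to guarantee that the set $A_p$ of~(\ref{eq:A}) is open; every subsequent step---the star refinement, the locally finite refinement, the partition of unity, the Dowker shrinking when $\Omega$ is finite, the construction of $S^*$, the appeal to Theorem~\ref{thm:continuous} through the KKM theorem to produce $p^*$, and the definitions of $P_n$ and $f_n$---used only that $A_p$ is open together with the constraint $\int_{\Omega}S_n(\omega,p)\,p(\dd\omega)\le0$. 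Since the constraint is still assumed, it suffices to show that $A_p$ is open when $S_n(\cdot,p)$ is merely strongly upper semicontinuous.

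First I would unwind strong upper semicontinuity: for each fixed $p$, pick continuous, bounded-above functions $g_1\ge g_2\ge\cdots$ on $\Omega$ with $g_k\to S_n(\cdot,p)$ pointwise, which is exactly what strong upper semicontinuity provides (and is where one needs more than plain upper semicontinuity on a non-metrizable compact). Then, by monotone convergence, for every $q\in\PPP(\Omega)$,
\begin{equation*}
  \int_{\Omega} S_n(\omega,p)\,q(\dd\omega)
  =
  \inf_k \int_{\Omega} g_k(\omega,p)\,q(\dd\omega).
\end{equation*}
Each map $q\mapsto\int_{\Omega}g_k(\omega,p)\,q(\dd\omega)$ is weak$^*$ continuous on $\PPP(\Omega)$ because $g_k(\cdot,p)\in C(\Omega)$; hence the left-hand side, an infimum of continuous (in particular upper semicontinuous) functions of $q$, is upper semicontinuous, so $A_p=\{q:\int_{\Omega}S_n(\omega,p)\,q(\dd\omega)<\delta\}$ is open. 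This is all the proof of Theorem~\ref{thm:randomized} needs, and the three guarantees follow by the identical argument.

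The hard part, and the reason I route through the openness of $A_p$ rather than invoking Theorem~\ref{thm:randomized} as a black box, is that $S_n$ cannot be replaced by a single \emph{legal} continuous move lying above it. A continuous $\tilde S_n\ge S_n$ makes $\int_{\Omega}\tilde S_n(\omega,p)\,p(\dd\omega)$ strictly positive at any $p$ with $\int_{\Omega}S_n(\omega,p)\,p(\dd\omega)=0$, violating Sceptic's constraint; and the natural remedy of subtracting $[\int_{\Omega}g_k(\omega,p)\,p(\dd\omega)]^+$ can push the move below $S_n$, destroying the domination $S_n\le\tilde S_n$ needed to transfer the bound $\int_{\PPP(\Omega)}S_n(\omega,p)\,P_n(\dd p)\le\epsilon2^{-n}$. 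Because $S_n$ is only semicontinuous, Dini's theorem does not apply and $g_k\downarrow S_n$ is not uniform in $p$, so no fixed $k$ serves all $p$ at once and a limit over the resulting measures $P_n^{(k)}$ need not preserve the upper bound. The openness argument sidesteps all of this: it is purely pointwise in $q$ and needs no uniformity, since an infimum of continuous functions is automatically upper semicontinuous. For a metric compact one can even skip the approximating sequence, as plain upper semicontinuity of $S_n(\cdot,p)$ already makes $q\mapsto\int_{\Omega}S_n\,\dd q$ upper semicontinuous by the portmanteau theorem, recovering the remark preceding the corollary.
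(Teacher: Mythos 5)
Your proof is correct, but it takes a genuinely different route from the paper's. You re-run the proof of Theorem \ref{thm:randomized} from the inside, isolating its single use of continuity in $\omega$---the openness of the set (\ref{eq:A})---and re-establishing it under strong upper semicontinuity via $\int_{\Omega}S_n(\omega,p)\,q(\dd\omega)=\inf_k\int_{\Omega}g_k(\omega,p)\,q(\dd\omega)$, an infimum of weak$^*$-continuous functionals of $q$ and hence upper semicontinuous in $q$; this is precisely the generalization the paper mentions in the remark preceding the corollary and explicitly declines to pursue. The paper instead deduces the corollary from Theorem \ref{thm:randomized} as a black box: fixing $\epsilon'$ with $(1+\epsilon')^2\le1+\epsilon$, it uses monotone convergence to choose, for each $p$ \emph{separately}, a continuous majorant $S'_n(\cdot,p)\ge S_n(\cdot,p)$ with $\int_{\Omega}S'_n(\omega,p)\,p(\dd\omega)\le\epsilon'2^{-n}$, makes the move legal by setting $S''_n:=(S'_n-\epsilon'2^{-n})/(1+\epsilon')$, and chains the capitals: $\K_n\le\K'_n\le(1+\epsilon')\K''_n\le(1+\epsilon')^2\F_n$. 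This means your stated reason for routing through the openness of $A_p$---that $S_n$ cannot be replaced by a legal continuous move lying above it, because no fixed $k$ serves all $p$---is mistaken: the randomized game imposes no regularity on $S_n$ in its second argument, so the index $k=k(p)$ may depend on $p$ arbitrarily and no uniformity is needed; and domination is required only for the intermediate function $S'_n$ (yielding $\K_n\le\K'_n$), not for the rescaled legal move $S''_n$, whose relation to $S'_n$ is pure bookkeeping. As for what each approach buys: the paper's reduction is shorter and keeps Theorem \ref{thm:randomized} intact, at the cost of the $\epsilon'$-rescaling; your route reproves the theorem but delivers it verbatim (same $\epsilon$, same construction of $P_n$ and $f_n$) and makes transparent that the openness of $A_p$ is the only point of contact with continuity in $\omega$, which also recovers, as you note at the end, the paper's observation that plain upper semicontinuity already suffices when $\Omega$ is a metric compact.
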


\begin{proof}
  The proof proceeds similarly to the end of the proof of Theorem \ref{thm:randomized}.
  Let $\epsilon'$ be a small positive constant
  (we will need $(1+\epsilon')^2\le1+\epsilon$).
  Fix, for a moment, a round $n$ of the game.
  By the monotone convergence theorem
  and the definition of strong upper semicontinuity,
  there exists a function $S'_n:\Omega\times\PPP(\Omega)\to\bbbr$
  such that $S'_n\ge S_n$,
  $S'_n(\omega,p)$ is continuous in $\omega\in\Omega$,
  and
  $
    \int_{\Omega}
      S'_n(\omega,p)
    p(\dd\omega)
    \le
    \epsilon' 2^{-n}
  $
  for all $p\in\PPP(\Omega)$.
  Theorem \ref{thm:randomized} is applicable to Sceptic's move
  \begin{equation*}
    S''_n
    :=
    \frac{1}{1+\epsilon'}
    \left(
      S'_n - \epsilon' 2^{-n}
    \right)
  \end{equation*}
  on round $n$,
  for each $n=1,2,\ldots$,
  and it asserts the existence of a strategy for Forecaster
  ensuring
  \begin{equation*}
    \K_n
    \le
    \K'_n
    \le
    (1+\epsilon')\K''_n
    \le
    (1+\epsilon')^2 \F_n,
  \end{equation*}
  where $\K'$ is the capital corresponding to the strategy $S'$
  (formally, $\K'_n:=1+\sum_{i=1}^{n}S'_i(\omega_i,p_i)$)
  and $\K''$ is the capital corresponding to the strategy $S''$.
\qedtext
\end{proof}

Theorem \ref{thm:randomized} is a general form of Theorem 5 in \cite{\DFI}
(that theorem is not part of the journal version).
This note is self-contained from the mathematical point of view,
but for further motivation behind Theorem \ref{thm:randomized}
the reader is referred to \cite{\DFI}.

\section{Discussion}\label{sec:discussion}

This note assumes that the outcome space $\Omega$ is a compact.
This assumption is not as restrictive as it seems
since a wide range of topological spaces have compactifications
that are still ``nice'' topological spaces
(cf.\ \cite{\AAVI}, the subsection on pp.~4--5).
It appears that implications of this fact for prediction with expert advice
(see, e.g., \cite{\DFVIII})
deserve to be explored.

\subsection*{Acknowledgments}

This work was motivated by Sasha Shen's question.
It was partially supported by EPSRC through grant EP/F002998/1.

\ifWP
  \DFGTlastpage
\fi

\begin{thebibliography}{10}

\bibitem{agarwal/etal:2001}
Ravi~P\DOT{} Agarwal, Maria Meehan, and Donal O'Regan.
\newblock {\em Fixed Point Theory and Applications}.
\newblock Cambridge University Press, Cambridge, England, 2001.

\bibitem{billingsley:1968}
Patrick Billingsley.
\newblock {\em Convergence of Probability Measures}.
\newblock Wiley, New York, 1968.

\bibitem{dudley:2002}
Richard~M\DOT{} Dudley.
\newblock {\em Real Analysis and Probability}.
\newblock Cambridge University Press, Cambridge, England, revised edition,
  2002.

\bibitem{engelking:1989}
Ryszard Engelking.
\newblock {\em General Topology}.
\newblock Heldermann, Berlin, second edition, 1989.

\bibitem{foster/vohra:1998}
Dean~P\DOT{} Foster and Rakesh~V\DOT{} Vohra.
\newblock Asymptotic calibration.
\newblock {\em Bio\-met\-rika}, 85:379--390, 1998.

\bibitem{gacs:2005}
Peter G\'acs.
\newblock Uniform test of algorithmic randomness over a general space.
\newblock {\em Theoretical Computer Science}, 341:91--137, 2005.

\bibitem{kakade/foster:2004}
Sham~M\DOT{} Kakade and Dean~P\DOT{} Foster.
\newblock Deterministic calibration and {N}ash equilibrium.
\newblock In John Shawe-Taylor and Yoram Singer, editors, {\em Proceedings of
  the Seventeenth Annual Conference on Learning Theory}, volume 3120 of {\em
  Lecture Notes in Computer Science}, pages 33--48, Heidelberg, 2004. Springer.

\bibitem{knaster/etal:1929}
Bronis\l{}aw Knaster, Kazimierz Kuratowski, and Stefan Mazurkiewicz.
\newblock Ein {B}eweis des {F}ixpunktsatzes f\"ur $n$-dimensionale {S}implexe.
\newblock {\em Fundamenta Mathematicae}, 14:132--137, 1929.

\bibitem{levin:1976uniform-latin}
Leonid~A\DOT{} Levin.
\newblock Uniform tests of randomness.
\newblock {\em Soviet Mathematics Doklady}, 17:337--340, 1976.

\bibitem{sandroni:2003}
Alvaro Sandroni.
\newblock The reproducible properties of correct forecasts.
\newblock {\em International Journal of Game Theory}, 32:151--159, 2003.

\bibitem{sandroni/etal:2003}
Alvaro Sandroni, Rann Smorodinsky, and Rakesh~V\DOT{} Vohra.
\newblock Calibration with many checking rules.
\newblock {\em Mathematics of Operations Research}, 28:141--153, 2003.

\bibitem{shafer/vovk:2001}
Glenn Shafer and \Vladimir{} Vovk.
\newblock {\em Probability and Finance: It's Only a Game!}
\newblock Wiley, New York, 2001.

\bibitem{vovk:arXiv0607067}
\Vladimir{} Vovk.
\newblock Competing with stationary prediction strategies.
\newblock Technical Report \texttt{arXiv:cs/0607067} [cs.LG],
  \texttt{arXiv.org} e-Print archive, July 2006.

\bibitem{vovk:arXiv0606093}
\Vladimir{} Vovk.
\newblock Predictions as statements and decisions.
\newblock Technical Report \texttt{arXiv:cs/0606093} [cs.LG],
  \texttt{arXiv.org} e-Print archive, June 2006.

\bibitem{GTP7full}
\Vladimir{} Vovk and Glenn Shafer.
\newblock Good randomized sequential probability forecasting is always
  possible, {T}he {G}ame-{T}heoretic {P}robability and {F}inance project,
  \texttt{http://prob\linebreak[0]a\linebreak[0]bil\linebreak[0]i\linebreak[0]%
ty\linebreak[0]and\linebreak[0]fi\linebreak[0]nance.com}, {W}orking {P}aper
  \#7, June 2003 (last revised August 2007).
\newblock Journal version: \emph{Journal of the Royal Statistical Society
  \emph{B}}, 67:747--763, 2005.

\bibitem{vovk/etal:arXiv0505083}
\Vladimir{} Vovk, Akimichi Takemura, and Glenn Shafer.
\newblock Defensive forecasting.
\newblock Technical Report \texttt{arXiv:cs/0505083} [cs.LG],
  \texttt{arXiv.org} e-Print archive, May 2005.

\end{thebibliography}
\end{document}